\theoremstyle{plain}
\newtheorem{theorem}{Theorem}[section]
\theoremstyle{definition}
\theoremstyle{remark}
\newcommand{\piref}{\pi_0}
\newcommand{\piopt}{\pi^*}
\newcommand{\Vs}{\mathcal{V}}
\newcommand{\Qs}{\mathcal{Q}}
\newcommand{\As}{\mathcal{A}}
\newcommand{\Rs}{\mathcal{R}}
\newcommand{\KL}{\operatorname{KL}}
\icmltitlerunning{Soft Policy Optimization}
\begin{document}

\twocolumn[
\icmltitle{Soft Policy Optimization: \\Online Off-Policy RL for Sequence Models} 



\icmlsetsymbol{equal}{*}

\begin{icmlauthorlist}
\icmlauthor{Taco Cohen}{equal,meta}
\icmlauthor{David W. Zhang}{equal,meta}
\icmlauthor{Kunhao Zheng}{meta}
\icmlauthor{Yunhao Tang}{meta}
\icmlauthor{Remi Munos}{meta}
\icmlauthor{Gabriel Synnaeve}{meta}
\end{icmlauthorlist}

\icmlaffiliation{meta}{Meta Platforms, Inc.}

\icmlcorrespondingauthor{Taco Cohen}{tscohen@meta.com}
\icmlcorrespondingauthor{David W. Zhang}{dwz@meta.com}

\icmlkeywords{Machine Learning, ICML}

\vskip 0.3in
]



\printAffiliationsAndNotice{\icmlEqualContribution} 

\begin{abstract}
RL-based post-training of language models is almost exclusively done using on-policy methods such as PPO. 
These methods cannot learn from arbitrary sequences such as those produced earlier in training, in earlier runs, by human experts or other policies, or by decoding and exploration methods.
This results in severe sample inefficiency and exploration difficulties, as well as a potential loss of diversity in the policy responses.
Moreover, asynchronous PPO implementations require frequent and costly model transfers, and typically use value models which require a large amount of memory.
In this paper we introduce Soft Policy Optimization (SPO), a simple, scalable and principled Soft RL method for sequence model policies that can learn from arbitrary online and offline trajectories and does not require a separate value model.
In experiments on code contests, we shows that SPO outperforms PPO on pass@10, is significantly faster and more memory efficient, is able to benefit from off-policy data, enjoys improved stability, and learns more diverse (i.e. soft) policies.

\end{abstract}

\section{Introduction}
\label{sec:introduction}
One of the key traits that distinguishes humans from other animals is our capacity for social learning and open-ended cultural evolution.
Through the invention of writing and the internet, AI is now able to leverage the product of our cultural evolution by pretraining, yielding near human-level understanding of many subjects.
Yet, the reinforcement learning stage, which holds the unique potential to surpass human-level performance, still requires repeating the exploration process with each new training run -- much like a species whose members are unable to learn from the discoveries and mistakes of others.

Indeed, although online on-policy RL methods such as PPO consistently outperform pure offline methods like DPO, they must re-discover good behaviour in each training run \citep{Xu2024-zy, Rafailov2023-tr, Schulman2017-vl}. 
For hard prompts (e.g. a coding challenge), it can easily take thousands of rollouts to find a correct response \citep{wu2024inferencescalinglawsempirical, Brown2024-gu}.
The resulting trajectory is then used to do a small clipped gradient update (lest the policy collapse), after which it is discarded.
In order to get unbiased gradients and guarantee convergence, the rollouts must be performed using standard sampling because alternative decoding or exploration methods result in off-policy trajectories.
Clearly, then, it would be highly desirable to have RL methods that can learn from arbitrary previous experiences, as it would allow us to leverage exploration methods during training, and re-use hard-to-discover solutions from previous training runs.
In this way, AI models become participants in the knowledge evolution process.

In this paper we introduce Soft Policy Optimization (SPO), a principled and effective asynchronous online off-policy soft RL method.
SPO leverages both online and offline data, and excels at learning policies with high entropy and high return.
A key idea behind our method is what we call the \emph{Cumulative Q-Parameterization}, which parameterizes the token-level soft action-value function in terms of policy and reference model log-probabilities.

This unification of policy and value function saves a significant amount of memory.
More importantly, we prove that a cumulatively parameterized Q-function satisfies both soft Bellman-consistency and path-consistency \citep{Nachum2017-is} \emph{by construction}, except at tokens where a reward is observed (in our case, only the last one).
Hence, neither Q-learning nor path-consistency losses are needed for non-terminal tokens.
We therefore explore a family of simple off-policy losses that we derive from first principles and which can be thought of equivalently as terminal Q-value regression or full-sequence path-consistency learning.

We further prove that if either the policy or the value function is optimal, then so is the other one.
This allows us to flexibly combine policy-based and value-based losses (on both online and offline trajectories),
with the understanding that all of them share the same optimum.

We implement both PPO and a number of SPO variants in an asynchronous distributed RL framework \citep{gehring2024rlef, Noukhovitch2024-lf} and evaluate their performance on the CodeContests and TACO competitive programming benchmarks \citep{li2022competition, li2023taco}.
Our experiments demonstrate that SPO consistently outperforms PPO in terms of pass@10 scores, indicating that SPO learns more diverse policies in accordance with the soft RL theory on which our method is based.
We also demonstrate that SPO can effectively leverage offline data in practice to speed up learning and yield improved results.

Furthermore, we perform a number of ablations to study the effect of different SPO loss variants.
To demonstrate the utility of combining different loss functions, we show that results can be improved by regressing the Q-values towards Monte-Carlo targets for a subset of tokens obtained via Pivotal Token Search \citep{Abdin2024-xc}.
Finally, because off-policy learning does not require frequent model transfer between compute nodes, we observe speedups of about 85\% compared to PPO.

\section{Theoretical Results}


\subsection{Setup \& Basic Results}

We consider a simple setup, where the policy $\pi(a | x)$ sees a prompt $x$ (e.g. a programming problem), responds with a sequence of tokens  (actions) $a = a_1, \ldots, a_T$, and then receives a deterministic reward $r(x, a) \in \{0, -1\}$, e.g. based on whether a solution passes all tests ($r=0$) or not ($r=-1$).
There is no observation beyond the prompt, no intermediate reward, no environment stochasticity, and no discounting.
The ideas and methods in this paper can be generalized, but this setup simplifies the exposition and matches our current experimental setup.

Following much recent work \cite{Jaques2016-ym, Ziegler2019-or, Rafailov2023-tr}, we consider the KL-regularized reward maximization problem (``Soft RL''), 
\begin{equation}
    \label{eq:objective}
    \mathcal{L}(\pi) = \mathbb{E}_\pi[r] - \beta \KL[\pi, \piref],
\end{equation}
where $\piref$ is a reference policy, typically an instruction-tuned LLM that is also used to initialize $\pi$.
It is well known that the optimum of $\mathcal{L}$ is given by
\begin{equation}
    \label{eq:piopt}
    \piopt(a \, | \, x) = \exp(r(x, a) / \beta) \, \piref(a \, | \, x) / Z.
\end{equation}
This can be shown by maximizing $\mathcal{L}(\pi)$ subject to a normalization constraint, or by recognizing $\mathcal{L}(\pi)$ as a variational bound on $\KL[\pi, \piopt]$ (explained shortly). 

When $\beta$ is small, the reward factor $\exp(r/\beta)$ will be close to $0$ for failure ($r=-1$) and equal $1$ for success ($r=0$), thus acting as a (soft) constraint.
Moreover, the target $\piopt$ will be approximately proportional to $\piref$ on the set of successful $a$, thus (theoretically) preserving the full diversity of correct responses in $\piref$ instead of hard-maxing the reward, which should benefit exploration.
Clearly, learning \emph{all} good responses to a query is a harder problem than learning only one good response, but should lead to a deeper understanding of the domain and a reduction in reliance on spurious correlations and memorization.
Soft RL, which learns a high-entropy policy, is also more aligned with generative modelling objectives than classical hard RL that converges to a deterministic policy.

Indeed, we can interpret $\piopt$ as a success-conditioned \emph{posterior} \citep{Levine2018-sd}.
In this interpretation we view $\piref(a|x)$ as a \emph{prior} over actions, and define the \emph{likelihood} of an auxiliary binary variable $p(o=1 | x, a) = \exp(r(x,a) / \beta)$ (well defined for $r \leq 0$).
We can think of $o=1$ as ``success'' or ``optimality''.
By Bayes rule, the posterior is proportional to likelihood times prior, so $p(a | x, o=1) = \piopt(a|x)$.

The normalizing constant $Z = \mathbb{E}_{\piref}[\exp(r / \beta)]$ can be expressed in log-space as $\beta \log Z = \mathbb{S}^\beta_{\piref}[r]$ using the generalized softmax operator:
\begin{equation}
    \begin{aligned}
    \mathbb{S}^\beta_{\piref(a|x)}[r] = \beta \log \sum_{a} \piref(a|x) \exp(r(x, a) / \beta)
    \end{aligned}
\end{equation}
Taking limits, we see that $\mathbb{S}^\beta_{\pi}[r]$ interpolates between the classical value function $V^\pi = \mathbb{E}_\pi[r] = \mathbb{S}^\infty_\pi[r]$ and the optimal value function $V^* = \max_a[r] = \mathbb{S}^0_{\pi}[r]$.
Hence, we can think of $\mathbb{S}^\beta_{\piref}[r]$ as a generalized ``soft'' value function:
\begin{equation*}
    \Vs = \mathbb{S}_{\piref(a | x)}^\beta[r] = \beta \log Z = \beta \log p(o=1 | x).
\end{equation*}
By expanding $\beta \KL[\pi, \piopt]$ and rearanging terms, we find
\begin{equation}
    \label{eq:vi-bound}
    \begin{aligned}
        \mathcal{L}(\pi) = V^\pi - \beta \KL[\pi, \piref]
        =
        \Vs - \beta \KL[\pi, \piopt],
    \end{aligned}    
\end{equation}
This remarkable equation, which we have not found in the Soft RL literature, can be understood as the fundamental equation of variational inference expressed in RL notation.
Because $\Vs$ is the (soft) value of $\piref$, it is a constant with respect to optimization, so that maximizing $\mathcal{L}(\pi)$ is equivalent to minimizing $\KL[\pi, \piopt]$, making it evident that $\piopt$ is indeed the optimum of $\mathcal{L}(\pi)$ as claimed before.

\subsection{Token-level Policy \& Value Functions}
Since we wish to train a sequence model, we consider the token-level policy and value functions.
Given a partial response $a_{\leq t} = a_1, \ldots, a_t$, we define the soft $\Qs$-value:
\begin{equation*}
    \Qs_t = \mathbb{S}_{\piref(a_{>t} | a_{\leq t}, x)}^\beta[r] = \beta \log p(o=1 | a_{\leq t}, x).
\end{equation*}
Note that $\Qs_0 = \Vs$.
Furthermore, we define the advantage
\begin{equation}
    \As_t = \Qs_t - \Qs_{t-1}
    \label{eq:advantage-value}
\end{equation}
The advantage of $a_t$ can be interpreted as the change in log-likelihood of $o=1$  (when following $\pi_0$) that results from appending $a_t$ to the sequence $a_{< t}$.

By applying the sum and product rules of probability to \autoref{eq:piopt}, we find that the token-level optimal policy can be expressed as:
\begin{equation}
        \piopt(a_t | a_{< t}, x) = \exp(\As_t / \beta) \, \piref(a_t | a_{< t}, x)
        \label{eq:piopt_t}
\end{equation}
Thus, the \emph{posterior} $\piopt$ is an easily computable function of the $\Qs$ or $\As$-value of the \emph{prior} $\piref$.

\subsection{Path- \& Bellman Consistency}

Taking logarithms of \autoref{eq:piopt_t} we find that
\begin{equation}
    \As_t = \beta \log \piopt(a_t | a_{< t}, x) - \beta \log \piref(a_t | a_{< t}, x)
    \label{eq:advantage-policy}
\end{equation}
Thus $\As_t$ tells us how much more likely $a_t$ is under $\piopt$ compared to $\piref$, and (by \autoref{eq:advantage-value}), how much more likely $o=1$ is after adding $a_t$ to the sequence $a_{< t}$.

Using the definition $\As_t = \Qs_t - \Qs_{t-1}$ we find that for any interval $t = t_1, \ldots, t_k$:
\begin{equation}
    \label{eq:path-consistency-def}
    \sum_{t = t_1}^{t_k} \As_t = \Qs_{t_k} - \Qs_{t_1 - 1},
\end{equation}
That is, analogous to the fundamental theorem of calculus, integrating (summing) the change (in log-prob) at each time step yields the total change (see also \citet{Jenner2022-es}).
This kind of equation, which relates the value function $\Qs$ and the difference of policy log-probs $\As$, has been called \emph{path consistency} \cite{Nachum2017-is}.


Consider now \autoref{eq:path-consistency-def} for the whole sequence (i.e. $t_1 = 1, t_k = T$).
Rearanging terms and using $\Qs_T = r$ we obtain a forward and backward expression for $\Qs_t$:
\begin{equation}
    \Qs_t = \Qs_0 + \sum_{t'=1}^{t} \As_{t'} = r - \sum_{t'=t}^T \As_{t'}
    \label{eq:Q-cumsum-A}        
\end{equation}
The forward equations says that the sequence of $\Qs_t$ values can be obtained by a cumulative sum of the advantages.

By definition of $\As_t$ (eq. \ref{eq:advantage-value}) we have $\Qs_{t+1} = \Qs_t + \As_{t+1}$.
Using this fact one can show that $\Qs$ satisfies the following Bellman-like consistency equation:
\begin{equation}
    \begin{aligned}
        \mathbb{S}^\beta_{\piref}[\Qs_{t+1}] 
        =
        \mathbb{S}^\beta_{\piref}[\As_{t+1}] + \Qs_{t} 
        =
        \Qs_t        
    \end{aligned}
    \label{eq:Q-Bellman}
\end{equation}
where the subscript $\piref$ refers to the next-token distribution $\piref(a_{t+1} | a_{\leq t}, x)$.
Like the Bellman equations in classical (hard) RL, this equation relates the values at the next time step (weighted by $\piref$) to the value at the current time step. (note that there is no instantaneous reward because we only consider terminal rewards in our setup).
Since $\mathbb{S}^\beta$ interpolates between $\mathbb{E}$ and $\max$, the soft Bellman equation generalizes both the Bellman evaluation and optimality equations. 

Path-consistency is a property that holds for \emph{every} trajectory or contiguous sub-trajectory, regardless of how it was produced. 
It relates the $\Qs$ values and policy log-probs, evaluated at specific token values $a_t$.
By contrast, Bellman consistency relates the $\Qs_t$-value of $a_{\leq t}$ to the $\Qs_{t+1}$ value for \emph{all possible next} tokens $a_{t+1}$, using a policy (in our case $\piref$) to weigh the different possibilities.

\subsection{The Cumulative Q-Parameterization}
\label{sec:cumulative-q-param}
So far, we have discussed relations between the log-probs and value functions associated with the optimal and reference policy.
Now, let us consider an arbitrary sequence policy $\pi_\theta(a \, | \, x) = \prod_t \pi_\theta(a_t \, |\, a_{< t}, x)$ parameterized by $\theta$.

Based on the expression of $\As_t$ as a difference of policy log-probs (\autoref{eq:advantage-policy}), we propose to estimate the advantage as
\begin{equation}
    \As_t^\theta \equiv \beta \left( \log \pi_\theta(a_{t} \,|\, a_{< t},x) - \log \piref(a_{t} \,|\, a_{<t}, x) \right).
    \label{eq:A-parameterization}
\end{equation}
Notice that when initializing $\pi_\theta \leftarrow \piref$, the advantage estimates start off at $0$.
Now, based on the expression of $\As_t$ as a difference of $\Qs$-values (\autoref{eq:advantage-value}), and using the result of \autoref{eq:Q-cumsum-A}, we define the {\bf cumulative $\Qs$-parameterization} as:
\begin{equation}
    \Qs_t^\theta \equiv \widehat{\Qs}_0 + \sum_{t'=1}^t \As_{t'}^\theta = \Qs_{t-1}^\theta + \As_t^\theta,
    \label{eq:Q-parameterization}
\end{equation}
That is, we compute all $\Qs_t^\theta$ as a cumulative sum of log-prob differences (advantages), plus an initial value estimate.
The initial value of the prompt $\Vs^\theta = \Qs_0^\theta = \widehat{\Qs}_0$ is estimated once before training using Monte-Carlo sampling from $\piref$ (see \autoref{sec:MC-targets}). 
One could also estimate $\Qs_0$ online using a running average or using $\pi_\theta$ (e.g. as a log-prob or difference at the last prompt token or summed over the prompt), but learning accurate generalizing value functions is hard.


As argued by \citet{Tang2023-yh}, parameterizing the value $\Qs_t^\theta$ in terms of previous value plus advantage is statistically advantageous, since $\Qs_{t-1}$ can share statistical power from all visits to $a_{< t}$ whereas $\As^\theta_{t}$ only needs to learn whether $a_t$ was a relative improvement (thus sidestepping the hard task of estimating the magnitude of $\Qs_{t}$).

\begin{theorem}
    $\Qs^\theta = \Qs$ iff $\pi_\theta = \piopt$ and $\widehat{\Qs}_0 = \Qs_0$.
    \label{thm:optimality}
\end{theorem}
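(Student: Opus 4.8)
The plan is to exploit the telescoping structure of the cumulative parameterization, which collapses the claimed equivalence into a pointwise comparison of advantages. First I would note that, by construction (\autoref{eq:Q-parameterization}), the sequence $(\Qs_t^\theta)_t$ is determined entirely by its initial value $\Qs_0^\theta = \widehat{\Qs}_0$ together with its increments $\As_t^\theta = \Qs_t^\theta - \Qs_{t-1}^\theta$, and the true sequence $(\Qs_t)_t$ is likewise determined by $\Qs_0 = \Vs$ and its increments $\As_t = \Qs_t - \Qs_{t-1}$ (\autoref{eq:advantage-value}, \autoref{eq:Q-cumsum-A}). Consequently $\Qs_t^\theta = \Qs_t$ for every partial sequence and every $t$ if and only if both $\widehat{\Qs}_0 = \Qs_0$ (the $t = 0$ case, where the cumulative sum is empty) and $\As_t^\theta = \As_t$ for all $t \geq 1$. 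One direction of this bookkeeping equivalence is immediate by taking differences of equal quantities, and the converse follows by summing equal increments from a common initial value.

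Next I would reduce the advantage identity to the policy identity. Comparing the parameterized advantage (\autoref{eq:A-parameterization}) with the closed form of the true advantage as a difference of optimal and reference log-probs (\autoref{eq:advantage-policy}), the equation $\As_t^\theta = \As_t$ reads $\beta \log \pi_\theta(a_t \mid a_{<t}, x) = \beta \log \piopt(a_t \mid a_{<t}, x)$, since the $\beta \log \piref$ terms cancel. As this must hold at every token in every context, it is equivalent to $\pi_\theta(a_t \mid a_{<t}, x) = \piopt(a_t \mid a_{<t}, x)$ everywhere, hence (both being products of these next-token factors) to $\pi_\theta = \piopt$. Chaining the two equivalences then yields the theorem.

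The derivation is essentially bookkeeping, so the only genuine subtlety I anticipate is a support caveat rather than a hard obstacle. The parameterized advantage (\autoref{eq:A-parameterization}) contains $\log \piref$ and is therefore only well defined on the support of $\piref$; where $\piref$ assigns zero mass, $\piopt$ does as well (\autoref{eq:piopt}), so such tokens carry no information. I would handle this by assuming $\piref$ has full support---natural for a softmax language model over a finite vocabulary---so that all log-probabilities are finite and both the cancellation of $\beta \log \piref$ and the passage from equal log-probs to equal distributions are valid without qualification. With that in place the proof needs no fixed-point or Bellman-consistency argument at all, which is precisely the advantage of the cumulative parameterization.
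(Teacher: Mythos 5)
Your proof is correct and takes essentially the same approach as the paper's: the paper's terse argument that the cumulative parameterization is an invertible map from $(\widehat{\Qs}_0, \text{policy log-probs})$ to the $\Qs^\theta_t$-values, with the true $\Qs$ and $\piopt$ related by the same map, is exactly your telescoping-plus-cancellation argument spelled out in detail. The explicit full-support assumption on $\piref$ that you add is a sensible statement of a hypothesis the paper leaves implicit.
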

\begin{proof}
    The cumulative Q-parameterization defines an invertible map from $\widehat{\Qs}_0$ and policy log-probabilities to $\Qs_t^\theta$-values, so either one determines the other.
    Moreover, by \autoref{eq:advantage-policy} and \autoref{eq:Q-cumsum-A}, the true / optimal value $\Qs$ and $\piopt$ are related by the same mapping.
    Hence, if $\Qs^\theta = \Qs$ then the policy is optimal (and trivially $\widehat{\Qs}_0 = \Qs_0)$, and vice versa.
\end{proof}
It follows from \autoref{thm:optimality} that we can freely combine policy-based and value-based RL methods for improving $\pi_\theta$ and $\Qs^\theta$, as they will converge on the same optimal $\theta^*$.
A natural idea would be to use temporal difference (TD) or path-consistency (PC) losses.
However, the following theorems show that Bellman consistency and path consistency are already satisfied by construction, making a loss unnecessary.

\begin{theorem}
    \label{thm:bellman}
    For any $\theta$ and any sequence of tokens $a_1, \ldots, a_T$, the cumulative Q-function $\Qs^\theta_t$ satisfies Bellman consistency for all $0 \leq t \leq T$.
    That is, 
    \begin{equation*}
        \Qs_t^\theta = \mathbb{S}^\beta_{\piref(a_{t+1} | a_{\leq t}, x)}[\Qs_{t+1}^\theta]
    \end{equation*}
\end{theorem}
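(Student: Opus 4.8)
The plan is to establish the identity by direct computation, substituting the cumulative parameterization into the right-hand side and showing that the softmax operator exactly undoes the last advantage increment. I would start by writing out the generalized softmax operator, so that the right-hand side becomes $\beta \log \sum_{a_{t+1}} \piref(a_{t+1} \mid a_{\leq t}, x)\, \exp\!\big(\Qs_{t+1}^\theta / \beta\big)$, with the sum ranging over the next token. The first move is then to invoke the recursive form of \autoref{eq:Q-parameterization}, namely $\Qs_{t+1}^\theta = \Qs_t^\theta + \As_{t+1}^\theta$, and to note that $\Qs_t^\theta$ depends only on $a_{\leq t}$, hence is constant with respect to the summation variable $a_{t+1}$. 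Pulling $\exp(\Qs_t^\theta/\beta)$ out of the sum and out of the logarithm yields $\Qs_t^\theta + \beta \log \sum_{a_{t+1}} \piref(a_{t+1} \mid a_{\leq t}, x)\, \exp\!\big(\As_{t+1}^\theta / \beta\big)$.

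The crux of the argument is that the advantage parameterization of \autoref{eq:A-parameterization} is built precisely so that $\piref$ cancels against the exponential: since $\As_{t+1}^\theta = \beta\big(\log \pi_\theta(a_{t+1}\mid a_{\leq t},x) - \log \piref(a_{t+1}\mid a_{\leq t},x)\big)$, we have $\exp(\As_{t+1}^\theta/\beta) = \pi_\theta(a_{t+1}\mid a_{\leq t},x) / \piref(a_{t+1}\mid a_{\leq t},x)$, so each summand collapses to $\pi_\theta(a_{t+1}\mid a_{\leq t},x)$. Because $\pi_\theta(\cdot \mid a_{\leq t},x)$ is a normalized next-token distribution, the sum equals $1$, the correction term $\beta \log 1 = 0$ vanishes, and we are left with exactly $\Qs_t^\theta$, as claimed.

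I do not expect a genuine obstacle: the statement reduces to this single cancellation, and the only content beyond bookkeeping is recognizing that the self-normalization of $\pi_\theta$ is what guarantees Bellman consistency for free. The two points that deserve a moment of care are the factoring step (checking that $\Qs_t^\theta$ is indeed independent of $a_{t+1}$ before extracting it from the sum) and the range of $t$: the identity is to be read for those $t$ at which the next-token distribution and $\Qs_{t+1}^\theta$ are defined, i.e. $0 \leq t \leq T-1$, and it is an internal consistency property of the parameterization that holds regardless of whether the terminal value $\Qs_T^\theta$ matches the observed reward $r$ — which is precisely why a separate Bellman or path-consistency loss is unnecessary at non-terminal tokens.
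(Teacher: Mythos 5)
Your proposal is correct and follows essentially the same route as the paper's own proof: both decompose $\Qs_{t+1}^\theta = \Qs_t^\theta + \As_{t+1}^\theta$, pull the constant $\Qs_t^\theta$ out of the softmax, and observe that $\mathbb{S}^\beta_{\piref}[\As_{t+1}^\theta] = 0$ because $\piref$ cancels against $\exp(\As_{t+1}^\theta/\beta)$, leaving the normalized sum $\sum_{a_{t+1}} \pi_\theta = 1$. Your remark that the identity only makes sense for $0 \leq t \leq T-1$ is a fair (and correct) sharpening of the range stated in the theorem.
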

\begin{proof}
    By definition, we have $\Qs^\theta_{t+1} = \Qs^\theta_t + \As^\theta_{t+1}$. Furthermore, as with the expectation and max operators, we can take out additive constants from the softmax, so 
    \begin{equation*}
        \mathbb{S}_{\piref}^\beta[\Qs^\theta_t + \As^\theta_{t+1}] = \mathbb{S}_{\piref}^\beta[\As^\theta_{t+1}] + \Qs^\theta_t
    \end{equation*}
    It remains to show that $\mathbb{S}_{\piref}^\beta[\As^\theta_{t+1}] = 0$:
    \begin{equation*}
        \begin{aligned}
            \mathbb{S}_{\piref}^\beta[\As^\theta_{t+1}]
            &=
           \beta \log \sum_{a_{t+1}} \piref(a_{t+1} | a_{\leq t}) \frac{\pi_\theta(a_{t+1}|a_{\leq t})}{\piref(a_{t+1}|a_{\leq t})} \\
            &= \beta \log \sum_{a_{t+1}} \pi_\theta(a_{t+1}|a_{\leq t}) = 0.
        \end{aligned}
    \end{equation*}
    (we left out conditioning on the prompt $x$ for brevity)
\end{proof}
Note that the theorem only speaks about \emph{internal} consistency between the $\Qs$-values produced by the model.
At time $t = T$ we obtain an external reward and since there is no guarantee that $\Qs^\theta_T = \Qs_t = r$ we will in general get an error at that time.
However, the intermediate TD errors $\Qs^\theta_t - \mathbb{S}_{\piref}^\beta[\Qs^\theta_{t+1}]$ are always $0$.
So although standard Q-learning and even methods such as Q-Transformer \citep{Chebotar2023-tc} that are used with transformers need to learn Bellman-consistency even for intermediate steps, our Q-parameterization is internally consistent by construction.

We have a similar result for path consistency:
\begin{theorem}
    \label{thm:path-consistency0}
    For any $\theta$ and any sequence of tokens $a_1, \ldots, a_T$, the cumulative Q-function $\Qs^\theta_t$ satisfies path consistency on any interval $t_1, \ldots, t_k$ with $t_1 \leq t_k \leq T$.
    That is, 
    \begin{equation*}
        \sum_{t = t_1}^{t_k} \As^\theta_t = \Qs^\theta_{t_k} - \Qs^\theta_{t_1 - 1},
    \end{equation*}
\end{theorem}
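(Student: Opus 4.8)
The plan is to recognize that path consistency for $\Qs^\theta$ is an immediate telescoping identity, built into the cumulative Q-parameterization by design. The defining recurrence in \autoref{eq:Q-parameterization} gives $\Qs_t^\theta = \Qs_{t-1}^\theta + \As_t^\theta$, which rearranges to $\As_t^\theta = \Qs_t^\theta - \Qs_{t-1}^\theta$. This is the exact $\theta$-analogue of the relation $\As_t = \Qs_t - \Qs_{t-1}$ (\autoref{eq:advantage-value}) from which the true path consistency (\autoref{eq:path-consistency-def}) was derived; the only difference is that there $\As_t$ was a difference of true $\Qs$-values, whereas here $\As_t^\theta$ is defined from policy log-probs and $\Qs_t^\theta$ is the constructed cumulative sum.

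First I would substitute $\As_t^\theta = \Qs_t^\theta - \Qs_{t-1}^\theta$ into the left-hand side and sum over $t = t_1, \ldots, t_k$. The sum then telescopes: every intermediate term $\Qs_{t_1}^\theta, \ldots, \Qs_{t_k - 1}^\theta$ appears once with a plus sign and once with a minus sign and cancels, leaving only $\Qs_{t_k}^\theta - \Qs_{t_1 - 1}^\theta$, which is exactly the claimed right-hand side. This completes the argument, and it holds for every interval with $t_1 \leq t_k \leq T$ and, crucially, for every $\theta$ with no optimality assumption.

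There is essentially no obstacle here: unlike \autoref{thm:bellman}, whose proof required the nontrivial fact that $\mathbb{S}_{\piref}^\beta[\As_{t+1}^\theta] = 0$ (using that the importance ratios re-normalize to the policy), path consistency needs no such computation because it involves only values and advantages evaluated at the specific realized tokens $a_t$ of the given trajectory, not an expectation or softmax over alternative next tokens. The identity is purely the fundamental-theorem-of-calculus structure noted after \autoref{eq:path-consistency-def}: summing the per-step changes $\As_t^\theta$ recovers the net change $\Qs_{t_k}^\theta - \Qs_{t_1 - 1}^\theta$. The only point worth stating carefully is that this is a property of the parameterization itself, so that a path-consistency training loss on non-terminal tokens is redundant.
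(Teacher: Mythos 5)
Your proof is correct and matches the paper's argument in substance: the paper substitutes the explicit cumulative-sum form of $\Qs_t^\theta$ and cancels the shared prefix $\widehat{\Qs}_0 + \sum_{t'=1}^{t_1-1}\As_{t'}^\theta$, which is the same algebraic identity as your telescoping of the recurrence $\Qs_t^\theta = \Qs_{t-1}^\theta + \As_t^\theta$. No gaps; the result indeed holds for every $\theta$ with no optimality assumption, exactly as the paper intends.
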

\begin{proof}
    Substitute the definition of $\Qs_t^\theta$ (\autoref{eq:Q-parameterization}) and cancel shared terms $\widehat{\Qs}_0^\theta + \sum_{t'=1}^{t_1 - 1} \As_{t'}^\theta$, leaving only $\sum_{t=t_1}^{t_k} \As_t^\theta$.
\end{proof}

In path consistency learning \citep{Nachum2017-is}, one learns a separate policy and value function, and attempts to enforce path consistency on random intervals by learning.
When using the cumulative Q-parameterization, there is no point to this because path consistency is guaranteed.


\begin{theorem}
    If $\Qs_T^\theta = \Qs_T$ for all $a$, then $\Qs^\theta = \Qs$.
\end{theorem}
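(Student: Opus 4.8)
The plan is to exploit the fact that both the model's $\Qs^\theta$ and the true values $\Qs$ obey the \emph{same} soft Bellman recursion, and then to propagate the assumed terminal agreement backwards in time by induction on $t$.

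First I would record the two recursions side by side. By \autoref{thm:bellman}, the cumulative parameterization satisfies $\Qs_t^\theta = \mathbb{S}^\beta_{\piref(a_{t+1} | a_{\leq t}, x)}[\Qs_{t+1}^\theta]$ for every $\theta$ and every sequence. By \autoref{eq:Q-Bellman}, the true soft values satisfy the identical equation $\Qs_t = \mathbb{S}^\beta_{\piref(a_{t+1} | a_{\leq t}, x)}[\Qs_{t+1}]$. The essential observation is that the soft Bellman operator appearing on the right-hand side depends only on the fixed reference next-token distribution $\piref$ and on the values at time $t+1$; it does not depend on $\theta$ except through those values.

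Next I would run a backward induction from $t = T$ down to $t = 0$, with the inductive claim that $\Qs_t^\theta(a_{\leq t}) = \Qs_t(a_{\leq t})$ for \emph{all} token sequences $a_{\leq t}$ of length $t$. The base case $t = T$ is exactly the hypothesis $\Qs_T^\theta = \Qs_T$ for all $a$. For the inductive step, I would fix an arbitrary prefix $a_{\leq t}$ and apply the two Bellman identities above; the inductive hypothesis gives $\Qs_{t+1}^\theta(a_{\leq t}, a_{t+1}) = \Qs_{t+1}(a_{\leq t}, a_{t+1})$ for every continuation $a_{t+1}$, so the two softmax expressions are applied to identical arguments with the same weighting $\piref$ and therefore coincide, yielding $\Qs_t^\theta(a_{\leq t}) = \Qs_t(a_{\leq t})$. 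This closes the induction and gives $\Qs^\theta = \Qs$ at every $t$.

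The step that requires care is the quantifier in the hypothesis: the theorem assumes terminal agreement \emph{for all} sequences $a$, and this is indispensable, because each backward Bellman step aggregates over all possible next tokens $a_{t+1}$ via the softmax. Agreement on a single trajectory would not suffice to equate these aggregates; it is precisely the ``for all $a$'' strength of the terminal condition, together with the $\theta$-independence of the Bellman operator, that lets equality propagate through the softmax at each step. Finally, I would note that by \autoref{thm:optimality} the conclusion $\Qs^\theta = \Qs$ is equivalent to $\pi_\theta = \piopt$ (with $\widehat{\Qs}_0 = \Qs_0$), so regressing only the terminal $\Qs$-values onto their true targets already pins down the optimal policy.
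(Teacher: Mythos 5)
Your proposal is correct and follows essentially the same route as the paper: backward induction from the terminal condition, propagating equality through the soft Bellman recursion that both $\Qs^\theta$ (by \autoref{thm:bellman}) and the true $\Qs$ (by \autoref{eq:Q-Bellman}) satisfy with respect to the same $\piref$-weighted softmax. Your version merely spells out the quantifier over all prefixes and continuations, which the paper's terse proof leaves implicit.
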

\begin{proof}
    By induction, assume $\Qs_{t+1}^\theta = \Qs_{t+1}$ for all $a_{t+1}$.
    Then by \autoref{thm:bellman}, 
    \begin{equation*}
        \Qs_t^\theta = \mathbb{S}_{\piref}^\beta[\Qs_{t+1}^\theta] = \mathbb{S}_{\piref}^\beta[\Qs_{t+1}] = \Qs_t.
    \end{equation*}
    Hence $\Qs^\theta = \Qs$.
\end{proof}
Together with \autoref{thm:optimality}, this demonstrates that if we can learn the \emph{terminal} $\Qs_T^\theta$ for all sequences, we have learned the true value function and optimal policy.

\section{Soft Policy Optimization}

SPO is a hybrid online off-policy method, which means we can learn from on-policy trajectories as well arbitrary offline data. 
A key feature of SPO is the use of the cumulative Q-Parameterization (\autoref{sec:cumulative-q-param}), which means we compute $\Qs_t^\theta$ as the value of the prompt $\widehat{\Qs}_0$ (estimated by sampling from $\piref$ before trainig) plus the cumulative sum of log-prob differences $\As_t^\theta = \beta(\pi_\theta(a_t | a_{<t},x) - \log \piref(a_t | a_{>t}, x))$.
Thus, quite intuitively, when the policy log prob for $a_t$ exceeds the reference model log prob (i.e. it is considered a good action by $\pi_\theta$), we have a positive advantage for $a_t$ and an increase in value $\Qs_t^\theta$.




\subsection{Learning Objectives}
\label{sec:objectives}

The results of the previous section show that a cumulative Q function (\autoref{eq:A-parameterization} and \autoref{eq:Q-parameterization}) is always self-consistent in both the Bellman- and path-consistency sense.
What remains to be learned is consistency with observed rewards.

The terminal $\Qs_T$ is equal to the terminal reward $r(x, a)$.
Hence, we wish to make $\Qs_T^\theta$ satisfy (for all $x, a$):
\begin{equation} \label{eq:path-consistency}
    \Qs_T^\theta \equiv \widehat{\Qs}_0 + \sum_{t=1}^T \As_t^\theta \approx r,
\end{equation}
where $\Qs_T^\theta$ is defined by the cumulative Q-parameterization, meaning $\widehat{\Qs}_0$ is a Monte-Carlo estimate of the soft value of the prompt $x$ (computed prior to training), and $\As^\theta_t = \beta (\log \pi_\theta - \log \piref)_t$ (\autoref{eq:A-parameterization}).

By rearranging terms, we can derive from \autoref{eq:path-consistency} various equations, each of which can be turned into an off-policy learning objective by choosing a loss function.
Below we first discuss the choice of loss function (\autoref{sec:losses}), and then consider the various off-policy learning objectives (\autoref{sec:off-policy-objectives}).
Finally, we also discuss the use of Monte-Carlo regression targets (\autoref{sec:MC-targets}) and policy gradients (\autoref{sec:PG}).
Here we will simply present all options; in \autoref{sec:experiments} we will systematically compare them.


\subsubsection{Loss functions}
\label{sec:losses}

Below we discuss different regression problems, each of which is defined by a prediction $\hat{y}$ that we want to bring close to a target $y$ (for instance, regressing $\Qs_T^\theta$ towards $r$).
To measure the discrepancy, we can employ a squared loss $L(\hat{y}, y) = (\hat{y} - y)^2$.
Where $\hat{y}$ and $y$ are log-probabilities, we can also use a binary cross-entropy loss: $L(\hat{y}, y) = - x \hat{y} - (1 - x) \log(1 - \exp(\hat{y}))$, with targets $x = \exp y$.

{\bf Clipping:} In some cases, the prediction $\hat{y}$ (and target $y$) may be larger than $0$ even though a log-probability can never be.
This results in infinities in the cross-entropy, so we use clipping on both $y$ and $\hat{y}$.
We clip the prediction $\hat{y}$ close to zero, and propagate gradient straight-through, ignoring clipping on the backward pass.
We further modify the positive term of the cross-entropy from $-x \hat{y}$ to $x \operatorname{relu}(-\hat{y})$ so as to remove any incentive to increase $\hat{y}$ above $0$.

{\bf Warping:} For any injective (difference-preserving) map $\sigma$, $\sigma(u) = \sigma(v) \Rightarrow u = v$.
Hence, for any equation implied by \autoref{eq:path-consistency}, we can apply $\sigma$ to both sides and aim to make the resulting equation hold.
In our experiments, we consider specifically the case where $\sigma$ is the sigmoid function.

\subsubsection{Off-Policy Objectives}
\label{sec:off-policy-objectives}

{\bf Terminal Q-Regression:} Most obviously, we can regress $\Qs_T^\theta$ towards the observed reward using some loss function $L(\Qs_T^\theta, r)$.
In Q-learning and other TD-methods, one typically regresses the Q-function towards TD targets using a squared loss.
Since $\Qs_t$ has an interpretation as a log-probability $\Qs_t = \log p(o=1 \,|\, a_{\leq t}, x)$, we can also use a binary cross-entropy loss.
Since $\Qs_t^\theta$ may exceed $0$, we use clipping as described in the previous section.

{\bf Non-Terminal Q-Regression:} 
Rearranging terms in \autoref{eq:path-consistency}, we can also attempt to make $\Qs_t^\theta = \widehat{\Qs}_0 + \sum_{t'=1}^t \As_t^\theta \approx r - \sum_{t'=t+1}^T \As_{t'}^\theta$ hold.
That is, we regress $\Qs_t^\theta$ towards targets $\Rs_t$ defined as $\Rs_t = \operatorname{detach}(r - \sum_{t'=t+1}^T \As_{t'}^\theta)$.
We refer to this as \emph{reverse-Q targets} because $\Rs_t$ is just a time-reversed estimate of $\Qs_t$.
Intuitively, the target for $\Qs_t^\theta$ is the reward, adjusted for the quality (advantage) of later actions.
If we obtained a low reward but this can be explained by later actions, perhaps $\Qs_t^\theta$ should still take a relatively high value.

When using squared loss $L = \sum_{t=1}^T (\Qs_t^\theta - \Rs_t)^2$, the error at each term is equal to the terminal one: $\Qs_t^\theta - \Rs_t = \Qs_T^\theta - r$.
However, because we detach the gradients of the targets $\Rs_t$, the gradients for the $t$-th term only flow into $\Qs_t^\theta$, leading to larger gradients for earlier terms.
Moreover, when using cross-entropy loss, the error itself is different for each term because the target $\Rs_t$ and predictor $\Qs_t$ enter into the cross-entropy formula in exponentiated and non-exponentiated form, respectively.

{\bf Advantage Regression:}
Finally, we can subtract $\widehat{\Qs}_0$ from \autoref{eq:path-consistency} to obtain $\sum_{t=1}^T \As_t^\theta = r - \widehat{Q}_0$.
Again the error is equivalent to terminal Q-regression, and in this case the gradients are also equivalent. 
Hence, we only consider advantage regression in combination with sigmoid-warping and a binary cross-entropy loss.

\subsubsection{Monte-Carlo Targets}  
\label{sec:MC-targets}

As mentioned, we use Monte-Carlo sampling to estimate $\widehat{Q}_0$.
Additionally, we experiment with directly regressing $\Qs_t^\theta$ towards a Monte-Carlo estimate of $\Qs_t$.
Recall that $\Qs_t = \mathbb{S}_{\piref}^\beta[r] = \beta \log \mathbb{E}_{\piref}[\exp{(r / \beta)}]$, i.e. it is the logarithm of the expectation of the exponentiated reward.
Approximating the expectation using samples from $\piref$ leads to a consistent but biased estimation of $\Qs_t$, due to the logarithm.
However, when using the cross-entropy loss, we get an unbiased estimate of the loss as long as the \emph{exponentiated} value targets are unbiased, which they are.

In order to estimate $\widehat{Q}_0$, we take $800$ samples from $\piref$ for each prompt, before training.
Since we assume our reward to be binary with $r \in \{0, -1\}$, we can estimate the success probability $\widehat{\mathcal{S}}_0 \approx p(r = 0 | x)$ and compute $\exp{(\Qs_0 / \beta)} = \mathbb{E}_{\piref}[\exp(r / \beta)] \approx \widehat{\mathcal{S}}_0 \exp(0/\beta)+ (1 - \widehat{\mathcal{S}}_0) \exp(-1 / \beta)$.


In order to test the utility of regressing $\Qs_t^\theta$ directly towards Monte-Carlo estimates, we annotate some of our offline data (described in \autoref{sec:experiments}) with estimates of the success probability $\mathcal{S}_t$ (from which a $\Qs_t$-estimate can be derived as above).
Since samples are taken from the reference model, early tokens in successful but improbable trajectories may produce failed rollouts with overwhelming probability, thus providing little learning signal.
Hence, we wish to focus the computational effort on ``interesting'' tokens, where the success probability changes significantly.

For this purpose, we use Pivotal Token Search as described in \citet{Abdin2024-xc}.
This algorithm searches for pivotal tokens (where the success probability changes significantly) by recursively bisecting the trajectory, performing $K$ rollouts to estimate $\mathcal{S}_t$ at the bisection point, and recursing if the endpoints of the current interval have sufficiently different $\mathcal{S}_t$.
In this way, a number of pivotal tokens are produced, and more importantly for our purposes, a sparse set of $\mathcal{S}_t$-estimates that can be used as targets for $\Qs_t^\theta$.

\subsubsection{Proximal Policy Optimization}
\label{sec:PG}



Our primary baseline is PPO \cite{Schulman2017-vl}, implemented following the approach in \citep{gehring2024rlef}.
PPO uses GAE \citep{Schulman2016-qi} to estimate the advantage based on a learned value model, and uses a clipped loss function to avoid excessive gradient updates.

{\bf Importance Weighting:} Because we use asynchronous RL (\autoref{sec:codebase}) with worker nodes performing rollouts and trainer nodes operating simultaneously, rollouts may be generated using a slightly out of date policy parameter $\theta_{\textup{old}}$, making them somewhat off-policy.
For limited degrees of staleness, a simple importance weighting correction suffices to make the gradients unbaised without introducing too much variance.
This is done by evaluating the policy probabilities $\pi_{\theta_{\textup{old}}}$ on the workers, and sending them to the trainers, which compute an importance weight $w_i = \pi_\theta(a_t | a_{<t}, x) / \pi_{\theta_{\textup{old}}}(a_t | a_{<t}, x)$ and multiplies the loss by this weight.

\subsection{Asynchronous On- \& Off-policy RL Framework}
\label{sec:codebase}

We perform asynchronous RL training in a distributed system consisting of trainer and worker nodes (each containing $8$ H100 GPUs), similar to \citet{gehring2024rlef, Noukhovitch2024-lf}. 

The workers continuously perform rollouts using a batched and throughput-optimized inference implementation and parallelized sandboxed code execution for reward evaluation.
The resulting trajectories (token sequences and rewards) are sent to the trainers.

The trainers collect a batch of trajectories (from workers and optionally from offline sources), and do a training update using one of the available loss functions.
In general this will require a forward pass on the latest policy and reference model, as well as the value model for PPO, followed by a backward pass on the latest policy and value model (if applicable).
After a certain number of steps (model\_update\_interval), we send new model weights from the trainers to the workers, in order to enable approximately on-policy rollouts.
Compared to a synchrounous approach -- where the same nodes alternate between collecting rollouts and doing training updates -- the asynchronous approach yields much higher throughput and continuously high GPU utilization.

The workers are configurable so that they can do on-policy rollouts (temperature $1$, using the latest policy), or use alternative decoding methods such as temperature and top-p sampling using the latest or reference policy.
Likewise, the trainers can be configured to use a mix of offline trajectories loaded from disk and online data produced by the workers.
Trainers can also be configured to use different loss functions (optionally depending on the source of data) including the ones discussed in \autoref{sec:objectives}.
When using on-policy losses, we set a low value for model\_update\_interval, and although there is no strict guarantee that training trajectories are strictly on-policy, we have found that low values such as 1 -- 4 lead to stable training.

\section{Related Work}

Our work is based on the Soft RL framework, also known as RL-as-inference, maximum-entropy RL and KL-divergence control \citep{Todorov2008-jv, ziebart2008maximum, Kappen2009-dm, Schulman2017-eq, Levine2018-sd, Lazaro-Gredilla2024-gl}.
It is the basis of several popular methods, such as Soft Q-Learning and Soft Actor-Critic \citep{Haarnoja2018-pa, Haarnoja2017-xe}.
Most recent works on RL for LLMs, including RLHF \citep{Ziegler2019-or, Ouyang2022-ml}, use the KL-regularized reward maximization soft RL objective.

The off-policy objectives we use can be understood either as $\Qs$-regression towards (future-corrected) empirical targets, or as path-consistency losses applied to the entire trajectory (since they relate the values $\widehat{Q}_0$ and $\Qs_T = r$ at the beginning and end of the trajectory to the policy log-probs along the trajectory).
There is however a fundamental difference to path-consistency learning as explored by \citet{Nachum2017-is}, because in their case the policy and value function are learned separately, and path-consistency must be enforced by a loss on every sub-trajectory $t_1, \ldots, t_k$.
In our case, path-consistency is satisfied on all sub-trajectories, and learning only happens on the entire trajectory (with the terminal $\Qs_T$-value replaced by the observed reward $r$).

Similarly, although we learn a soft Q function, SPO is different from Soft Q-Learning \citep{Haarnoja2017-xe}, because we do not use Bellman backups (since according to \autoref{thm:bellman}, all TD errors are zero).
Other works have explored Q-Learning with transformers \citep{Chebotar2023-tc}, but again these methods need to learn internal Bellman-consistency whereas for SPO it is guaranteed.

Expressions involving a sum of log-probability differences $\log \pi_\theta - \log \piref$ have appeared in a number of recent works, most notably DPO \citep{Rafailov2023-tr} and DRO \citep{Richemond2024-ds}.
However these methods differ in that DPO is a preference-optimization method (requiring preference pairs rather than rewards for training), and DRO is a purely offline training method.
Such sums have also been used in concurrent work to parameterize ``process reward models'' (a new name for advantage \citep{Schulman2016-qi}), which can be used to speed up learning in policy gradient methods \citep{Yuan2024-tg, Cui2025-fv}.
However, these works use the resulting PRM only to accelerate learning with policy gradients, rather than using it directly as a policy and value function as we do.
Indeed, the authors found that their PRMs perform poorly as policies, which is likely due to the fact that they omitted the critical $\widehat{\Qs}_0$ estimate, and use a different loss function.

Other recent works have noted that removing the value model from PPO can have benefits, both for reasons of memory and bias, in the context of LLM training.
Group Relative Policy Optimization \citep{Shao2024-vi} samples multiple independent trajectories and computes an advantage for each, which is then used in the clipped PPO loss as a replacement for the GAE estimate.
Similarly, VinePPO uses Monte-Carlo rollouts for advantage estimation at each reasoning step in a rollout \citep{Kazemnejad2024-gs}.
Although these works share the benefit of not requiring a separate value model, neither can leverage off-policy data.


\section{Experiments \& Results}
\label{sec:experiments}

Our experiments are centered around two main questions: first, is incorporating previous experiences during the training of SPO possible and beneficial, and secondly, can SPO learn a better and more diverse policy compared to PPO? For the first point, it is important that SPO can not only train stably on arbitrary offline data, but actually benefits from adding more offline data from diverse sources. 

{\bf Experiment setting:} We run all our experiments on the challenging CodeContests benchmark introduced by \citet{li2022competition}. In this task the LLM is presented with a code contest problem description, followed by a description of the input and outputs, and some example input and outputs. The LLM needs to generate a Python solution that solves the coding challenge correctly within the given memory and time constraints. A solution is deemed correct if it produces the correct output for all the test inputs, including inputs that are not shown in the prompt. 
We evaluate the models based on the average over the evaluation sets in CodeContests, and in TACO \citep{li2023taco}. In our experience adding the TACO tests has proven to be a more reliable way of tracking progress. We deduplicate the CodeContests training data against the TACO test data.
We evaluate the models with the pass@10 score, which measures whether out of 10 solutions, at least one solution passes all the tests. This metric favors models with diverse outputs compared to the more commonly seen pass@1. To compute the pass@10 we sample 20 responses with a temperature of 0.4 and top-p of 0.95, and use the estimator from \citep{li2022competition}.

We initialize all policies using Llama-3.1-8B-Instruct \citep{grattafiori2024llama3herdmodels}. The model is then trained for 100,000 steps with a batch size of 128. We use 64 GPUs to update the policy and 32-64 GPUs to generate new samples. We use the AdamW optimizer with a warm-up period of 200 steps, followed by a constant learning rate of $6 \times 10^{-8}$.

We test two settings for SPO that differ only in their training data: pure online vs a combination of online and offline data in equal proportions. Both SPO runs use a $\beta=1/\log(100000) \approx 1/11.5$, and use the Q-regression with cross-entropy loss. For each training problem we estimate $\Qs_0$ with Monte Carlo simulation by sampling 800 completions and evaluating their correctness.
In order to demonstrate SPO's ability to handle diverse data sources, we gather offline data from the \emph{reference model} (sampled from $\piref$ with random temperature and top-p 0.95), from an \emph{online SPO} run and \emph{PPO} run, samples generated by Llama-70B with a \emph{chain-of-though} (CoT) prompt, and 
\emph{human solutions} from the CodeContests dataset. We remove duplicates from the offline dataset and filter for correct solutions.
The online samples for SPO are generated from its current policy with a random temperature sampled from $\mathcal{U}(0.1, 0.8)$ and a top-p value of 0.95. This steers the training process towards sequences for which the model assigns higher probability. 

Our primary baseline is a well-tuned PPO implementation with a KL-regularized reward using the same $\beta$ as SPO. PPO is the most commonly used policy gradient method and it has also found popular application in training LLMs \citep{Ouyang2022-ml,gehring2024rlef}. It can train on slightly off-policy trajectories by applying importance sampling, but generally benefits from on-policy samples using the latest model. Hence we update the behavior policy after each batch of samples. This requires the trainer nodes to frequently send model updates to the worker nodes, which incurs a heavy slowdown in overall training speed (even though the transfer is done asynchronously). In contrast, in preliminary experiments with SPO we observed no significant benefits from on-policy samples or frequent model updates. Hence, we decrease the update frequency of the behavior policy to every 10 batches of samples. This has a significant effect on the training time, resulting in an {\bf 85\% speedup} in terms of wall clock time.

\begin{figure}
    \centering
    \includegraphics[width=0.85\columnwidth]
    {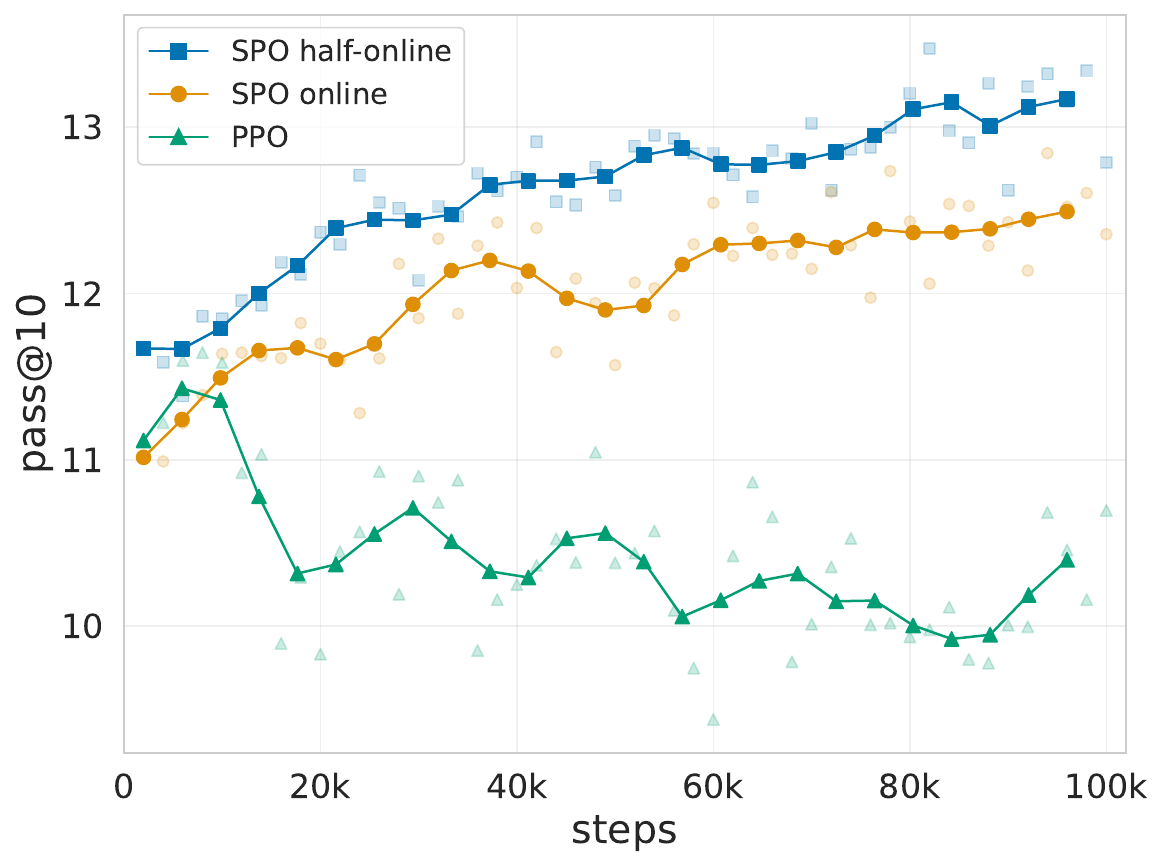}
    \caption{Code generation performance measured in pass@10 vs training step. SPO trained on mixed online and offline data outperforms both the pure online SPO and PPO. }
    \label{fig:SPO vs PPO}
\end{figure}

{\bf Results:} In \autoref{fig:SPO vs PPO}, the results show that the pass@10 performance of SPO steadily improves over training, while it declines somewhat for PPO.
Additionally, we observe that the half-online SPO run, which incorporates offline data, not only remains stable but also shows better results.
Importantly, both PPO and SPO-online require the same number of GPUs for training and collecting rollouts. In contrast, SPO half-online trains on 50\% offline samples, effectively reducing the sampling cost by half.

On pass@1, PPO performs better, yielding 8.4 vs 6.3 for SPO half-online and 6.0 for online SPO.

{\bf PPO discussion:} Our PPO baseline matches the pass@1 performance reported in \citep{gehring2024rlef} for the single-turn setup, but its improvement in pass@1 performance co-occurs with a modest decline in the pass@10 performance.
This happens because the PPO policy quickly learns to sample a single solution per problem, so the original difference between pass@1 and pass@10 performance diminishes over the duration of training. In contrast, SPO improves both while preserving the difference.

\subsection{Ablations and Extensions}

In our previous experiment we focused on a specific implementation of SPO: terminal Q-regression with cross-entropy loss. This allowed us to assess the practical feasibility of SPO by examining its training dynamics over long training runs. In this section we will ablate these choices in shorter training runs and explore additional directions to improve the performance of SPO. We consider terminal Q-regression with cross-entropy loss as the primary baseline, which we refer to simply as SPO. Unless otherwise specified, the ablation runs use the half-online setting.

\begin{figure*}[t!]
    \centering
    \subfigure[]{
        \includegraphics[width=0.31\textwidth]{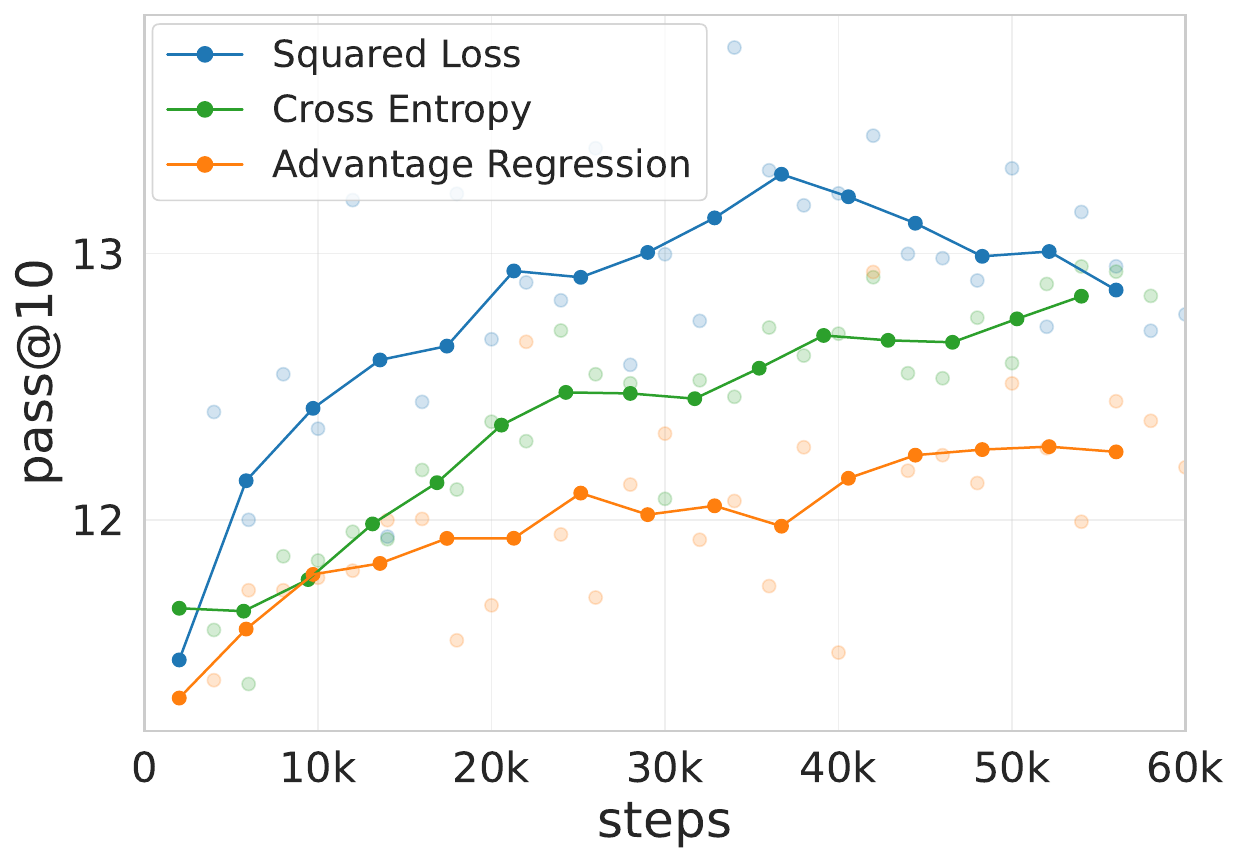}
        \label{fig:SPO_criterion}
    }
    \hfill
    \subfigure[]{
        \includegraphics[width=0.31\textwidth]{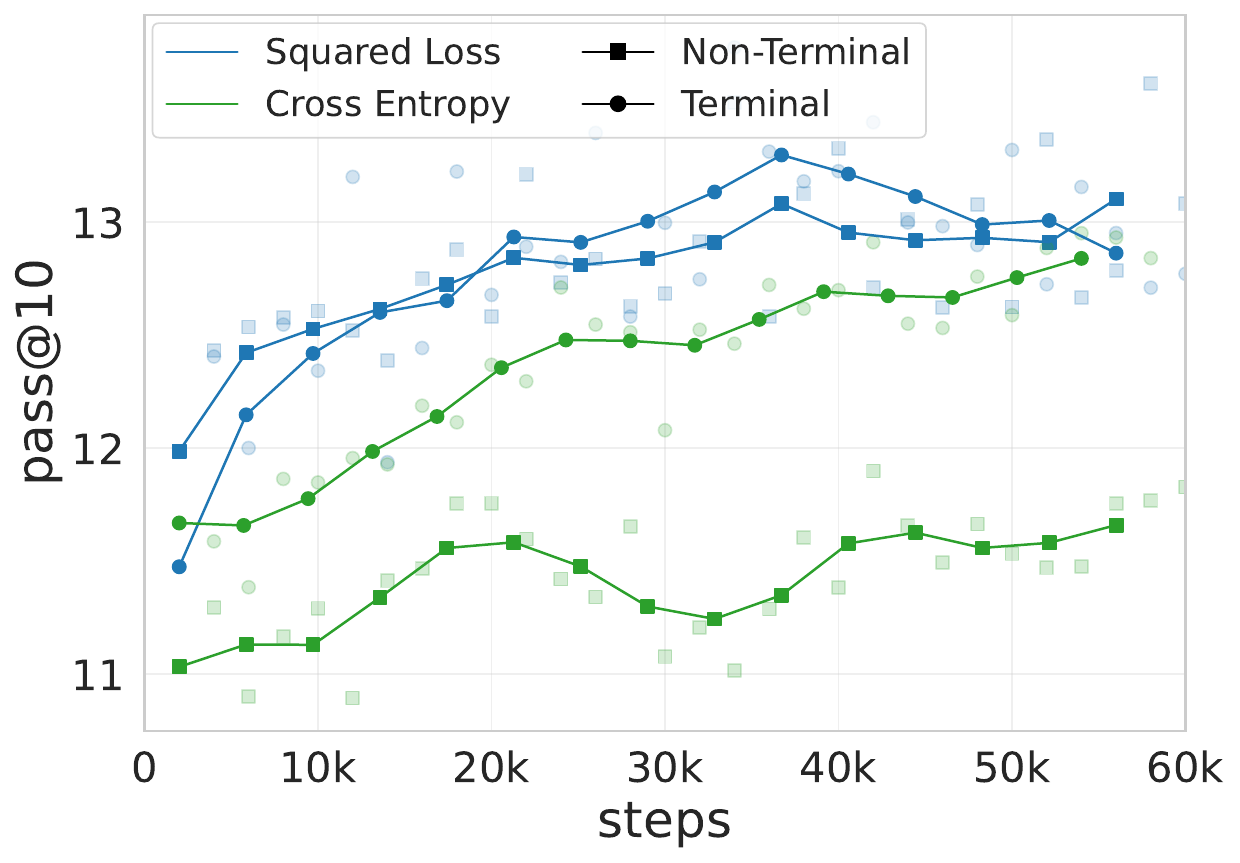}
        \label{fig:SPO_terminal_vs_non_terminal}
    }
    \hfill
    \subfigure[]{
        \includegraphics[width=0.31\textwidth]{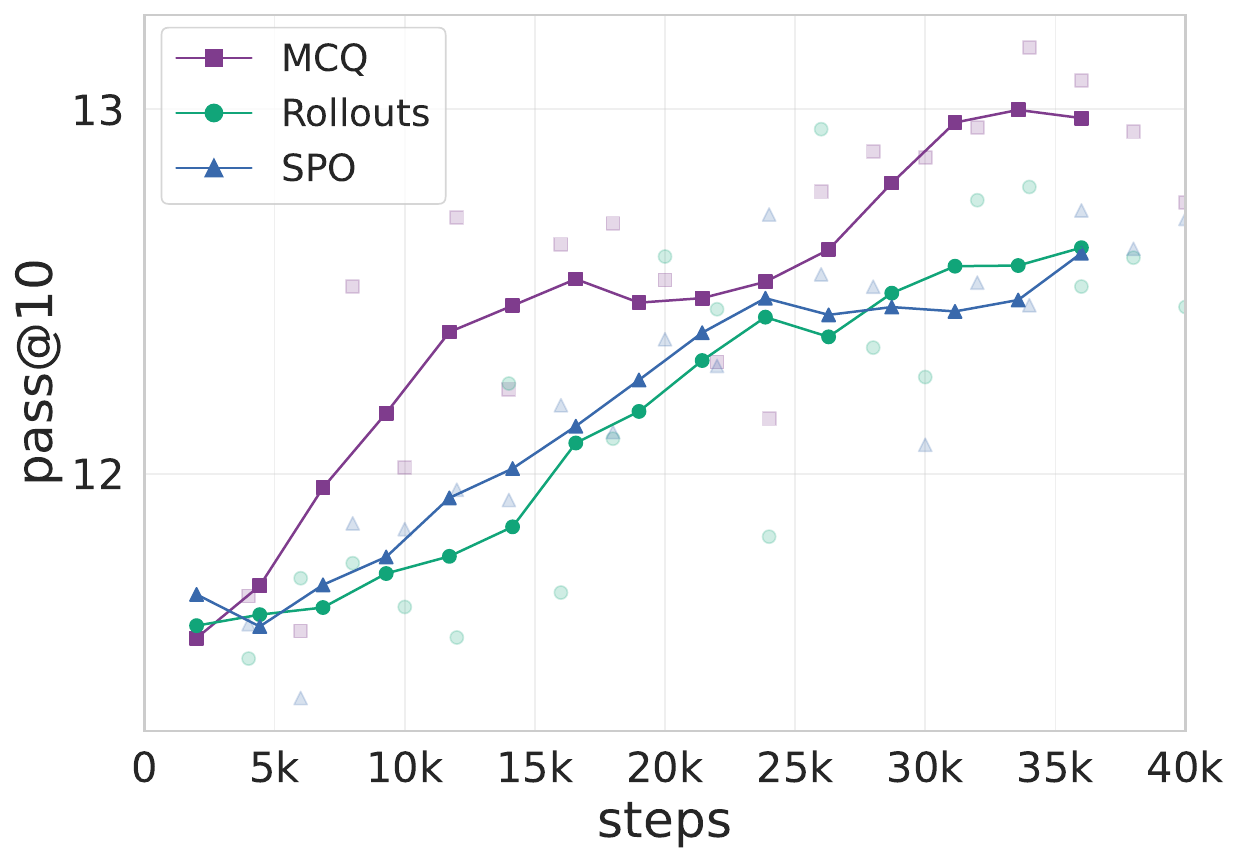}
        \label{fig:SPO_monte_carlo_targets}
    }
    \vspace{-1em}
    \caption{Ablation of different SPO settings and extensions: (a) Terminal Q-regression with different losses. (b) Terminal vs.\ non-terminal Q-regression with reverse-Q targets, combined with either squared or cross-entropy loss. (c) Non-terminal Q-regression with Monte Carlo Q-estimations at pivotal tokens, ablated against terminal Q-regression on the rollouts gathered during the pivotal token search.}
    \label{fig:combined_figure}
\end{figure*}

{\bf Loss function:} We begin by investigating the impact of different loss functions on performance.
We compare terminal Q-regression with squared loss and cross-entropy loss, and advantage regression with sigmoid warping and cross-entropy loss.
The results in \autoref{fig:SPO_criterion} show that the choice of loss function significantly affects the speed of improvement, with the squared loss outperforming the other two.

Interestingly, our findings differ from those of \citet{Yuan2024-tg}, who reported policy performance degradation when using a similar cross-entropy loss. This discrepancy can be attributed to the difference in target labels: their model was trained on response-level correctness labels, whereas our advantage regression loss correctly captures the mathematical relationship between the policy and the value function, leading to improved policy performance.

{\bf Terminal vs.\ Non-Terminal:} Next, we investigate whether a non-terminal loss using reverse-Q targets can enhance the performance. We conduct experiments with non-terminal Q-regression using both squared loss and cross-entropy loss. The results in \autoref{fig:SPO_terminal_vs_non_terminal} indicate that combining the non-terminal targets with cross-entropy loss significantly degrades the performance, while for the squared loss we observe no substantial effect.

{\bf Monte Carlo Targets:} Finally, we examine the effectiveness of using Monte Carlo estimation for non-terminal $\Qs_t$ in place of reverse-Q targets. We specifically target time steps $t$ that show the greatest increase in successive $\Qs_t$ values, identified through a Pivotal Token Search on multiple correct samples from our offline dataset (\autoref{sec:MC-targets}). This process produces 50,000 trajectories, annotated with the success probability estimated via 10 rollouts at each token visited by PTS. We  replace half of the offline data with these samples, resulting in a data distribution of 50\% online, 25\% standard offline, and 25\% samples with Monte Carlo targets.
We also introduce a baseline that replaces half of the offline data with rollouts obtained during the pivotal token search. This baseline checks whether integrating the additional information from rollouts with the terminal loss is sufficient, or if condensing these rollouts into $\Qs_t$ targets for non-terminal targets is essential. The results in \autoref{fig:SPO_monte_carlo_targets} show that the non-terminal Q-regression towards Monte Carlo targets significantly improves the performance. This happens despite only adding 50,000 trajectories, which corresponds to less than 1,600 steps per epoch. In contrast, applying the terminal loss solely on the rollouts performs the same as the baseline.

\section{Discussion}

The goal of Soft RL is to learn a policy that not only knows \emph{a} good response to every query, but ideally knows \emph{all} good responses to every query.
Although any KL-regularized reward maximization method shares this objective in theory, in practice policy gradient methods fail to preserve diversity in the policy.
They rapidly learn to increase the probability of responses that already have a reasonably high probability under the reference policy.
As a result they achieve good pass@1 performance early on, but it becomes increasingly hard to discover good responses that were not known already.
By contrast, even pure online SPO is able to steadily improve on both pass@1 and pass@10, with the gap remaining consistent or growing.
This effect is further enhanced by the inclusion of offline data containing solutions with relatively low probability under the reference model.

The cumulative Q parameterization provides a unified policy and value function that are guaranteed to be consistent.
This feature of our method enables many interesting possibilities for combined policy and value-based learning.
For instance, although still limited in scale, our experiments with Monte-Carlo $\Qs_t$ estimates show that combining terminal $\Qs$ loss with MC targets results in faster learning and improved final performance.
Other options, such as combining SPO with a policy gradient loss, remain to be explored as well.

One limitation of our current implementation is that we do not update the reference model.
Doing so would require online estimation of $\Qs_0$, but is likely to improve both pass@1 and pass@10 performance (preliminary experiments using fixed $\widehat{\Qs}_0$ support this notion).

In our experiments we have demonstrated the benefit of including offline data, but the full extent of the benefit of our approach will become evident only when we continuously grow the offline dataset as we run more and more experiments and ablations.
Although such a cumulative procedure becomes hard to reproduce, it drastically reduces the inference cost of RL training runs by amortizing it.

\section{Conclusion}

In this paper we have presented Soft Policy Optimization, a hybrid online off-policy RL method for language model improvement.
SPO is based on the \emph{cumulative Q-parameterization}, which has a number of appealing theoretical properties first revealed in this work.
Our large-scale experiments on code contests demonstrate that SPO is able to leverage diverse and highly off-policy data, and unlike PPO is able to preserve diversity in the policy throughout training, resulting in improved pass@10 scores.
Moreover, SPO is significantly more scalable and faster to train, due to a reduced need for model transfers, significantly reduced memory consumption, and reduced need for inference compute due to the use of offline data.
Although here we have experimented with a fixed set of offline data, the benefits of offline data are poised to grow as we accumulate diverse solutions to hard problems across the many training runs that are required for frontier model development.





\bibliography{refs}

\begin{thebibliography}{32}
\providecommand{\natexlab}[1]{#1}
\providecommand{\url}[1]{\texttt{#1}}
\expandafter\ifx\csname urlstyle\endcsname\relax
  \providecommand{\doi}[1]{doi: #1}\else
  \providecommand{\doi}{doi: \begingroup \urlstyle{rm}\Url}\fi

\bibitem[Abdin et~al.(2024)Abdin, Aneja, Behl, Bubeck, Eldan, Gunasekar,
  Harrison, Hewett, Javaheripi, Kauffmann, Lee, Lee, Li, Liu, Mendes, Nguyen,
  Price, de~Rosa, Saarikivi, Salim, Shah, Wang, Ward, Wu, Yu, Zhang, and
  Zhang]{Abdin2024-xc}
Abdin, M., Aneja, J., Behl, H., Bubeck, S., Eldan, R., Gunasekar, S., Harrison,
  M., Hewett, R.~J., Javaheripi, M., Kauffmann, P., Lee, J.~R., Lee, Y.~T., Li,
  Y., Liu, W., Mendes, C. C.~T., Nguyen, A., Price, E., de~Rosa, G., Saarikivi,
  O., Salim, A., Shah, S., Wang, X., Ward, R., Wu, Y., Yu, D., Zhang, C., and
  Zhang, Y.
\newblock {Phi-4 Technical Report}.
\newblock 2024.
\newblock URL \url{http://arxiv.org/abs/2412.08905}.

\bibitem[Brown et~al.(2024)Brown, Juravsky, Ehrlich, Clark, Le, Ré, and
  Mirhoseini]{Brown2024-gu}
Brown, B., Juravsky, J., Ehrlich, R., Clark, R., Le, Q.~V., Ré, C., and
  Mirhoseini, A.
\newblock {Large language monkeys: Scaling inference compute with repeated
  sampling}.
\newblock 2024.
\newblock URL \url{http://arxiv.org/abs/2407.21787}.

\bibitem[Chebotar et~al.(2023)Chebotar, Vuong, Irpan, Hausman, Xia, Lu, Kumar,
  Yu, Herzog, Pertsch, Gopalakrishnan, Ibarz, Nachum, Sontakke, Salazar, Tran,
  Peralta, Tan, Manjunath, Singht, Zitkovich, Jackson, Rao, Finn, and
  Levine]{Chebotar2023-tc}
Chebotar, Y., Vuong, Q., Irpan, A., Hausman, K., Xia, F., Lu, Y., Kumar, A.,
  Yu, T., Herzog, A., Pertsch, K., Gopalakrishnan, K., Ibarz, J., Nachum, O.,
  Sontakke, S., Salazar, G., Tran, H., Peralta, J., Tan, C., Manjunath, D.,
  Singht, J., Zitkovich, B., Jackson, T., Rao, K., Finn, C., and Levine, S.
\newblock {Q}-transformer: Scalable offline reinforcement learning via
  autoregressive {Q}-functions.
\newblock \emph{CoRL}, 229:\penalty0 3909--3928, September 2023.
\newblock URL
  \url{https://proceedings.mlr.press/v229/chebotar23a/chebotar23a.pdf}.

\bibitem[Cui et~al.()Cui, Yuan, Wang, Wang, Li, He, Fan, Yu, Xu, Chen, Yuan,
  Chen, Zhang, Lv, Wang, Yao, Han, Peng, Cheng, Liu, Sun, Zhou, and
  Ding]{Cui2025-fv}
Cui, G., Yuan, L., Wang, Z., Wang, H., Li, W., He, B., Fan, Y., Yu, T., Xu, Q.,
  Chen, W., Yuan, J., Chen, H., Zhang, K., Lv, X., Wang, S., Yao, Y., Han, X.,
  Peng, H., Cheng, Y., Liu, Z., Sun, M., Zhou, B., and Ding, N.
\newblock {Process Reinforcement through IMplicit {rEwards}}.
\newblock URL \url{http://arxiv.org/abs/2502.01456}.

\bibitem[Gehring et~al.(2024)Gehring, Zheng, Copet, Mella, Cohen, and
  Synnaeve]{gehring2024rlef}
Gehring, J., Zheng, K., Copet, J., Mella, V., Cohen, T., and Synnaeve, G.
\newblock Rlef: Grounding code llms in execution feedback with reinforcement
  learning.
\newblock \emph{arXiv preprint arXiv:2410.02089}, 2024.

\bibitem[Haarnoja et~al.(2017)Haarnoja, Tang, Abbeel, and
  Levine]{Haarnoja2017-xe}
Haarnoja, T., Tang, H., Abbeel, P., and Levine, S.
\newblock {Reinforcement learning with deep energy-based policies}.
\newblock 2017.
\newblock URL \url{http://arxiv.org/abs/1702.08165}.

\bibitem[Haarnoja et~al.(2018)Haarnoja, Zhou, Abbeel, and
  Levine]{Haarnoja2018-pa}
Haarnoja, T., Zhou, A., Abbeel, P., and Levine, S.
\newblock Soft actor-critic: Off-policy maximum entropy deep reinforcement
  learning with a stochastic actor.
\newblock \emph{arXiv [cs.LG]}, January 2018.
\newblock URL \url{http://arxiv.org/abs/1801.01290}.

\bibitem[Jaques et~al.(2016)Jaques, Gu, Bahdanau, Hernández-Lobato, Turner,
  and Eck]{Jaques2016-ym}
Jaques, N., Gu, S., Bahdanau, D., Hernández-Lobato, J.~M., Turner, R.~E., and
  Eck, D.
\newblock {Sequence tutor: Conservative fine-tuning of sequence generation
  models with KL-control}.
\newblock 2016.
\newblock URL \url{http://arxiv.org/abs/1611.02796}.

\bibitem[Jenner et~al.(2022)Jenner, van Hoof, and Gleave]{Jenner2022-es}
Jenner, E., van Hoof, H., and Gleave, A.
\newblock {Calculus on MDPs: Potential Shaping as a Gradient}.
\newblock 2022.
\newblock URL \url{http://arxiv.org/abs/2208.09570}.

\bibitem[Kappen et~al.(2009)Kappen, Gomez, and Opper]{Kappen2009-dm}
Kappen, B., Gomez, V., and Opper, M.
\newblock {Optimal control as a graphical model inference problem}.
\newblock 2009.
\newblock URL
  \url{https://cdn.aaai.org/ojs/13573/13573-40-17091-1-2-20201228.pdf}.

\bibitem[Kazemnejad et~al.(2024)Kazemnejad, Aghajohari, Portelance, Sordoni,
  Reddy, Courville, and Roux]{Kazemnejad2024-gs}
Kazemnejad, A., Aghajohari, M., Portelance, E., Sordoni, A., Reddy, S.,
  Courville, A., and Roux, N.~L.
\newblock {VinePPO: Unlocking RL potential for LLM reasoning through refined
  credit assignment}.
\newblock 2024.
\newblock URL \url{http://arxiv.org/abs/2410.01679}.

\bibitem[Levine(2018)]{Levine2018-sd}
Levine, S.
\newblock Reinforcement learning and control as probabilistic inference:
  Tutorial and review.
\newblock \emph{arXiv [cs.LG]}, May 2018.
\newblock URL \url{http://arxiv.org/abs/1805.00909}.

\bibitem[Li et~al.(2023)Li, Fu, Zhang, Huang, Sun, Lyu, Liu, Jin, and
  Li]{li2023taco}
Li, R., Fu, J., Zhang, B.-W., Huang, T., Sun, Z., Lyu, C., Liu, G., Jin, Z.,
  and Li, G.
\newblock Taco: Topics in algorithmic code generation dataset.
\newblock \emph{arXiv preprint arXiv:2312.14852}, 2023.

\bibitem[Li et~al.(2022)Li, Choi, Chung, Kushman, Schrittwieser, Leblond,
  Eccles, Keeling, Gimeno, Dal~Lago, et~al.]{li2022competition}
Li, Y., Choi, D., Chung, J., Kushman, N., Schrittwieser, J., Leblond, R.,
  Eccles, T., Keeling, J., Gimeno, F., Dal~Lago, A., et~al.
\newblock Competition-level code generation with alphacode.
\newblock \emph{Science}, 378\penalty0 (6624):\penalty0 1092--1097, 2022.

\bibitem[{Llama Team, AI \@ Meta}(2024)]{grattafiori2024llama3herdmodels}
{Llama Team, AI \@ Meta}.
\newblock The llama 3 herd of models, 2024.
\newblock URL \url{https://arxiv.org/abs/2407.21783}.

\bibitem[Lázaro-Gredilla et~al.(2024)Lázaro-Gredilla, Ku, Murphy, and
  George]{Lazaro-Gredilla2024-gl}
Lázaro-Gredilla, M., Ku, L.~Y., Murphy, K.~P., and George, D.
\newblock {What type of inference is planning?}
\newblock 2024.
\newblock URL \url{http://arxiv.org/abs/2406.17863}.

\bibitem[Nachum et~al.(2017)Nachum, Norouzi, Xu, and Schuurmans]{Nachum2017-is}
Nachum, O., Norouzi, M., Xu, K., and Schuurmans, D.
\newblock Bridging the gap between value and policy based reinforcement
  learning.
\newblock February 2017.
\newblock URL \url{http://arxiv.org/abs/1702.08892}.

\bibitem[Noukhovitch et~al.(2024)Noukhovitch, Huang, Xhonneux, Hosseini,
  Agarwal, and Courville]{Noukhovitch2024-lf}
Noukhovitch, M., Huang, S., Xhonneux, S., Hosseini, A., Agarwal, R., and
  Courville, A.
\newblock {Asynchronous RLHF: Faster and more efficient off-policy RL for
  language models}.
\newblock 2024.
\newblock URL \url{http://arxiv.org/abs/2410.18252}.

\bibitem[Ouyang et~al.(2022)Ouyang, Wu, Jiang, Almeida, Wainwright, Mishkin,
  Zhang, Agarwal, Slama, Ray, Schulman, Hilton, Kelton, Miller, Simens, Askell,
  Welinder, Christiano, Leike, and Lowe]{Ouyang2022-ml}
Ouyang, L., Wu, J., Jiang, X., Almeida, D., Wainwright, C.~L., Mishkin, P.,
  Zhang, C., Agarwal, S., Slama, K., Ray, A., Schulman, J., Hilton, J., Kelton,
  F., Miller, L.~E., Simens, M., Askell, A., Welinder, P., Christiano, P.,
  Leike, J., and Lowe, R.~J.
\newblock {Training language models to follow instructions with human
  feedback}.
\newblock abs/2203.02155:\penalty0 27730--27744, 2022.
\newblock URL
  \url{https://cdn.openai.com/papers/Training_language_models_to_follow_instructions_with_human_feedback.pdf}.

\bibitem[Rafailov et~al.(2023)Rafailov, Sharma, Mitchell, Ermon, Manning, and
  Finn]{Rafailov2023-tr}
Rafailov, R., Sharma, A., Mitchell, E., Ermon, S., Manning, C.~D., and Finn, C.
\newblock Direct preference optimization: Your language model is secretly a
  reward model.
\newblock \emph{arXiv [cs.LG]}, May 2023.
\newblock URL \url{http://arxiv.org/abs/2305.18290}.

\bibitem[Richemond et~al.(2024)Richemond, Tang, Guo, Calandriello, Azar,
  Rafailov, Pires, Tarassov, Spangher, Ellsworth, Severyn, Mallinson, Shani,
  Shamir, Joshi, Liu, Munos, and Piot]{Richemond2024-ds}
Richemond, P.~H., Tang, Y., Guo, D., Calandriello, D., Azar, M.~G., Rafailov,
  R., Pires, B.~A., Tarassov, E., Spangher, L., Ellsworth, W., Severyn, A.,
  Mallinson, J., Shani, L., Shamir, G., Joshi, R., Liu, T., Munos, R., and
  Piot, B.
\newblock {Offline regularised reinforcement learning for large language models
  alignment}.
\newblock 2024.
\newblock URL \url{http://arxiv.org/abs/2405.19107}.

\bibitem[Schulman et~al.(2016)Schulman, Moritz, Levine, Jordan, and
  Abbeel]{Schulman2016-qi}
Schulman, J., Moritz, P., Levine, S., Jordan, M., and Abbeel, P.
\newblock {High-Dimensional Continuous Control Using Generalized Advantage
  Estimation}.
\newblock pp.\  1--9, 2016.
\newblock URL \url{http://arxiv.org/abs/1506.02438v1}.

\bibitem[Schulman et~al.(2017{\natexlab{a}})Schulman, Chen, and
  Abbeel]{Schulman2017-eq}
Schulman, J., Chen, X., and Abbeel, P.
\newblock Equivalence between policy gradients and soft {Q}-learning.
\newblock \emph{arXiv [cs.LG]}, April 2017{\natexlab{a}}.
\newblock URL \url{http://arxiv.org/abs/1704.06440}.

\bibitem[Schulman et~al.(2017{\natexlab{b}})Schulman, Wolski, Dhariwal,
  Radford, and Klimov]{Schulman2017-vl}
Schulman, J., Wolski, F., Dhariwal, P., Radford, A., and Klimov, O.
\newblock {Proximal Policy Optimization Algorithms}.
\newblock 2017{\natexlab{b}}.
\newblock URL \url{http://arxiv.org/abs/1707.06347}.

\bibitem[Shao et~al.(2024)Shao, Wang, Zhu, Xu, Song, Zhang, Li, Wu, and
  Guo]{Shao2024-vi}
Shao, Z., Wang, P., Zhu, Q., Xu, R., Song, J., Zhang, M., Li, Y.~K., Wu, Y.,
  and Guo, D.
\newblock {DeepSeekMath: Pushing the Limits of Mathematical Reasoning in Open
  Language Models}.
\newblock 2024.
\newblock URL \url{http://arxiv.org/abs/2402.03300}.

\bibitem[Tang et~al.(2023)Tang, Munos, Rowland, and Valko]{Tang2023-yh}
Tang, Y., Munos, R., Rowland, M., and Valko, M.
\newblock {VA-learning as a more efficient alternative to Q-learning}.
\newblock 2023.
\newblock URL \url{http://arxiv.org/abs/2305.18161}.

\bibitem[Todorov(2008)]{Todorov2008-jv}
Todorov, E.
\newblock {General duality between optimal control and estimation}.
\newblock 9:\penalty0 4286--4292, 2008.
\newblock URL
  \url{https://homes.cs.washington.edu/~todorov/papers/TodorovCDC08.pdf}.

\bibitem[Wu et~al.(2024)Wu, Sun, Li, Welleck, and
  Yang]{wu2024inferencescalinglawsempirical}
Wu, Y., Sun, Z., Li, S., Welleck, S., and Yang, Y.
\newblock Inference scaling laws: An empirical analysis of compute-optimal
  inference for problem-solving with language models.
\newblock 2024.
\newblock URL \url{https://arxiv.org/abs/2408.00724}.

\bibitem[Xu et~al.()Xu, Fu, Gao, Ye, Liu, Mei, Wang, Yu, and Wu]{Xu2024-zy}
Xu, S., Fu, W., Gao, J., Ye, W., Liu, W., Mei, Z., Wang, G., Yu, C., and Wu, Y.
\newblock {Is DPO superior to PPO for LLM alignment? A comprehensive study}.
\newblock URL \url{http://arxiv.org/abs/2404.10719}.

\bibitem[Yuan et~al.(2024)Yuan, Li, Chen, Cui, Ding, Zhang, Zhou, Liu, and
  Peng]{Yuan2024-tg}
Yuan, L., Li, W., Chen, H., Cui, G., Ding, N., Zhang, K., Zhou, B., Liu, Z.,
  and Peng, H.
\newblock {Free process rewards without process labels}.
\newblock 2024.
\newblock URL \url{http://arxiv.org/abs/2412.01981}.

\bibitem[Ziebart et~al.(2008)Ziebart, Maas, Bagnell, Dey,
  et~al.]{ziebart2008maximum}
Ziebart, B.~D., Maas, A.~L., Bagnell, J.~A., Dey, A.~K., et~al.
\newblock Maximum entropy inverse reinforcement learning.
\newblock 8:\penalty0 1433--1438, 2008.

\bibitem[Ziegler et~al.(2019)Ziegler, Stiennon, Wu, Brown, Radford, Amodei,
  Christiano, and Irving]{Ziegler2019-or}
Ziegler, D.~M., Stiennon, N., Wu, J., Brown, T.~B., Radford, A., Amodei, D.,
  Christiano, P., and Irving, G.
\newblock {Fine-tuning language models from human preferences}.
\newblock 2019.
\newblock URL \url{http://arxiv.org/abs/1909.08593}.

\end{thebibliography}
\bibliographystyle{icml2025}

\end{document}